\spnewtheorem*{notation}{Notation}{\itshape}{\rmfamily}
\begin{document}

\title*{Nonparametric Hierarchical Clustering of Functional Data}
\author{Marc Boull\'e, Romain Guigour\`es and Fabrice Rossi}
\institute{Boull\'e, Guigour\`es \at Orange Labs \at 2 av. Pierre Marzin \at 22300 Lannion - France \at \email{marc.boulle@orange.com , romain.guigoures@orange.com}
\and Guigour\`es, Rossi  \at SAMM EA 4543 - Universit\'e Paris 1 \at 90 rue de Tolbiac \at 75013 Paris - France \at \email{romain.guigoures@malix.univ-paris1.fr , fabrice.rossi@univ-paris1.fr}}
%
%
\maketitle

\abstract{
 In this paper, we deal with the problem of curves clustering. We propose a nonparametric method which partitions the curves into clusters and discretizes the dimensions of the curve points into intervals. The cross-product of these partitions forms a data-grid which is obtained using a Bayesian model selection approach while making no assumptions regarding the curves. 
Finally, a post-processing technique, aiming at reducing the number of clusters in order to improve the interpretability of the clustering, is proposed. It consists in optimally merging the clusters step by step, which corresponds to an agglomerative hierarchical classification whose dissimilarity measure is the variation of the criterion. Interestingly this measure is none other than the sum of the Kullback-Leibler divergences between clusters distributions before and after the merges.
The practical interest of the approach for functional data exploratory analysis is presented and compared with an alternative approach on an artificial and a real world data set.
}
\newpage
\section{Introduction}

In functional data analysis (FDA \cite{RamsayEtAl05}), observations are
functions (or curves). Each function is sampled at possibly different
evaluation points, leading to variable-length sets of pairs (evaluation point,
function value).
Functional data arise in many domains, such as daily records of precipitation at a weather station or hardware monitoring where each curve is a time series related to a physical quantity recorded at a specified sampling rate.

Exploratory analysis methods for large functional data sets are needed in
practical applications such as e.g. electric consumption monitoring
\cite{HebrailEtAl10}. They reduce data complexity by combining clustering
techniques with function approximation methods, representing a functional
data set by a small set of piecewise constant prototypes. In this type of
approach, both the number of prototypes and the number of segments (constant
parts of the prototypes) are under user control. On a positive side, this
limits the risk of cognitive overwhelming as the user can ask for a low
complexity representation. Unfortunately, this can also induce
under/over-fitting of the model to the data; additionally the number of
prototypes and the number of segments both need to be tuned, while they can be
adjusted independently in \cite{HebrailEtAl10}, increasing the risk of
over/under-fitting. Other parametric approaches for function clustering and/or
function approximation can be found in
e.g. \cite{CadezEtAl00,chamroukhi_et_al_neurocomputing2010,GaffneySmythNips04,RamsayEtAl05}. All
those methods make (sometimes implicit) assumptions on the distribution of the
functions and/or on the measurement noise.

Nonparametric functional approaches (e.g. \cite{FerratyEtAl06}) have been proposed, in particular in
\cite{GasserEtAl98,DelaigleEtAl10}, where the problem of density estimation of
a random function is considered. However, those models do not tackle directly
the summarizing problem outlined in \cite{HebrailEtAl10} and recalled above. Nonparametric Bayesian approaches based on Dirichlet process have also been applied to the problem of curves clustering. They aim at inferring a clustering distribution on an infinite mixture model \cite{nguyen,Teh2010a}. The clustering model is obtained by sampling the posterior distribution using Bayesian inference methods.

The present paper proposes a new nonparametric exploratory method for
functional data, based on data grid models \cite{BoulleHOPR10}. The method
makes assumption neither on the functional data distribution nor on the
measurement noise. Given a set of sampled functions defined on a common
interval $[a,b]$, with values in $[u,v]$, the method outputs a clustering of
the functions associated to partitions of $[a,b]$ and $[u,v]$ in sub-intervals
which can be used to summarize the values taken by the functions in each
cluster, leading to results comparable to those of \cite{HebrailEtAl10}. Both approaches are for that matter compared in this article.

The method has no parameters and obtains in a fully automated way an optimal
summary of the functional data set, using a Bayesian approach with data
dependent priors. In some cases, especially for large scale data sets, the
optimal number of clusters and of sub-intervals may be too large for a user to interpret all the discovered fine grained patterns in a reasonable time. Therefore, the method is
complemented with a post-processing step which offers the user a way to decrease
the number of clusters in a greedy optimal way. The number of sub-intervals,
that is the level of details kept in the functions, is automatically
adjusted in an optimal way when the number of clusters is reduced. 

The post-processing technique consists in merging successively the clusters in
the least costly way, from the finest clustering model to one single cluster
containing all the curves. It appears that the cost of the merge of two
clusters is a weighted sum of Kullback-Leibler divergences from the merged
clusters to the created cluster which can be interpreted as a dissimilarity
measure between the two clusters that have been merged. Thus, the
post-processing technique can be considered as an agglomerative hierarchical
clustering \cite{HastieEtAl01}. Decision-making tools can be plotted using a
dendrogram and a Pareto chart of the criterion value as a function of the
number of clusters.
 
The rest of the paper is organized as follows. Section~\ref{sec:FunctionApproximationBasedMethods} introduces the problem of curves clustering and relates our method to alternative approaches. Next, in Section~\ref{sec:MODL}, the clustering method based on joint density estimation is introduced. Then, the post-processing technique is detailed in section~\ref{sec:AHC}. In Section~\ref{sec:results} the results of experimentations on an artificial data set and on a power consumption data set are shown. Finally Section~\ref{sec:Conclusion} gives a summary.

%
%
%
%
%
%
\section{Functional data exploratory analysis}
\label{sec:FunctionApproximationBasedMethods}
In this section, we describe in formal terms the data under analysis and the
goals of the analysis.

Let $\mathcal{C}$ be a collection of $n$ functions or curves, $c_i, 1 \leq i
\leq n$, defined from $[a,b]$ to $[u,v]$, two real intervals.  Each curve is
sampled at $m_i$ values in $[a,b]$, leading to a series of observations
denoted $c_i = (x_{ij}, y_{ij})_{j=1}^{m_i}$, with $y_{ij}=c_i(x_{ij})$. 

As in all data exploratory settings, our main goal is to reduce the complexity
of the data set and to discover patterns in the data. We are therefore
interested in finding clusters of similar functions as well as in finding
functional patterns, that is systematic and simple regular shapes in
individual functions. In
\cite{chamroukhi_et_al_neurocomputing2010,HebrailEtAl10} functional patterns
are simple functions such as interval indicator functions or polynomial
functions of low degree: a function is approximated by a linear combination of
such simple functions in \cite{HebrailEtAl10} or generated by a logistic
switching process based on low degree polynomial functions in
\cite{chamroukhi_et_al_neurocomputing2010}. B-splines could also be used as
in \cite{AbrahamEtAl2003} but with no simplification virtues.

Let us denote $k_C$ the number of curve clusters. Given $k_C$ classes
$\mathcal{F}_k$ of ``simple functions'' used to discover functional patterns
(e.g., piecewise constant functions with $P$ segments), the method proposed in
\cite{HebrailEtAl10} finds a partition $(\mathcal{C}_k)_{k=1}^{k_C}$
of $\mathcal{C}$ and $k_C$ simple functions $(f_k\in
\mathcal{F}_k)_{k=1}^{k_C}$ which aim at minimizing
\begin{equation}
\sum_{k=1}^{k_C}\sum_{c_i\in \mathcal{C}_k}\sum_{j=1}^{m_i}\left(y_{ij}-f_k(x_{ij})\right)^2,
\label{eq:fda:model:based}
\end{equation}
which corresponds to a form of K-means constrained by the choice of the segments, in the functional space
$L^2$. The approach of \cite{chamroukhi_et_al_neurocomputing2010} optimizes a
similar criterion obtained from a maximum likelihood estimation of the
parameters of the functional generative model. 

Given a specific choice of the simple function classes, the functional
prototypes $(f_k)_{k=1}^{k_C}$ obtained by
\cite{chamroukhi_et_al_neurocomputing2010,HebrailEtAl10} induce $k_C$
partitions of $[a,b]$ into sub-intervals on which functions are roughly
constant. Those partitions are the main tool used by the analyst to understand
the functional pattern inside each cluster. The general abstract goal of
functional data exploration is therefore to build clusters of similar
functions associated to sub-intervals of the input space of the functions which
summarize the behavior of the functions. 

Bayesian Approaches, as described in \cite{nguyen}, assume that the collection of curves realizations can be represented by a set of canonical curves drawn from a Gaussian Process and organized into clusters. The clusters are described using a label function that is a realization of a multinomial distribution with a Dirichlet prior. Whereas parametric models using a fixed and finite number of parameters may suffer from over- or under-fitting, Bayesian nonparametric approaches were proposed to overcome these issues. By using a model with an unbounded complexity, underfitting is mitigated, while the Bayesian approach of computing or approximating the full posterior over parameters lessens over-fitting \cite{Teh2010a}. Finally, the parameters distribution is obtained by sampling the posterior distribution using Bayesian inference methods such as Markov Chain Monte Carlo \cite{Neal} or Variational Inference \cite{Jordan}. Then a post-treatment is required for the choice of the clustering parameters among their distribution.

The Dirichlet Process prior requires two parameters : a concentration parameter and a base distribution. For a concentration parameter $\alpha$ and a data set containing $n$ curves, the expected number of clusters $\bar{k}$ is $\bar{k}=\alpha \log(n)$ \cite{Wallach}. Hence, the concentration parameter has a significant impact on the obtained number of clusters. For that matter, according to \cite{Vogt}, one should not expect to be able to reliably estimate this parameter.

Our method - named MODL and detailed in Section \ref{sec:MODL} - is comparable to approaches based on Dirichlet process (DP) in so much as all estimate a posterior probability based on the likelihood and a prior distribution of the parameters. The methods are also nonparametric with an unbounded complexity, since the number of parameters is not fixed and grows with the amount of available data.

Nevertheless, MODL is intrinsically different from the DP based methods. First, approaches based on DP are Bayesian and yield a distribution of clusterings, the final clustering being selected using a post-treatment like chosing the mode of the posterior distribution or by studying the clusters co-occurence matrix. By contrast, MODL is a MAP approach, the most probable model is directly obtained using optimization algorithms.
Secondly, MODL is not applied on the values but on the order statistics of the sample. One first benefit is to avoid outliers or scaling problems. By using order statistics, the retrieved models are invariant by any monotonic transformation of the input data, which makes sense since the method aims at modeling the correlations between the variables, not the values directly.
Then, DP based methods consider distributions of the parameters that lie in $\mathbb{R}$ or any continuous space, which measure is consequently infinite. As for MODL, the correlations between the variables are modeled on a sample. In the case of curves clustering, these variables are the location $X$, the corresponding curve realization $Y$, and the curve label $C$. This allows to work on a finite discrete space and thus to simplify the model computation, that mainly comes down to counting problems.
Finally, the MODL approach is clearly data dependant. In a first phase, the data sample is used cautiously to build the model space and the prior : only the size of the sample and the values (or empirical ranks) of each variable taken independently are exploited. The correlation model is inferred in a second phase, using a standard MAP approach. Hence, proving the consistency of this data dependant modeling technique is still an open issue. Actually, experimental results with both reliable and fine grained retrieved patterns show the relevancy of the approach.%
%
%
%
%
%
\section{MODL Approach for Functional Data Analysis}
\label{sec:MODL}
In this section, we summarize the principles of data grid models, detailed in \cite{BoulleHOPR10}, and apply this approach on the functional data. 

\subsection{Data Grid Models}

Data grid  models \cite{BoulleHOPR10} have been introduced for the data preparation phase of the data mining process \cite{ChapmanEtAl00}, which is a key phase, both time consuming and critical for the quality of the results.
They allow to automatically, rapidly and reliably evaluate the class conditional probability of any subset of variables in supervised learning and the joint probability in unsupervised learning. 
Data grid models are based on a partitioning of each variable into intervals in the numerical case and into groups of values in the categorical case. The cross-product of the univariate partitions forms a multivariate partition of the representation space into a set of cells. This multivariate partition, called data grid, is a piecewise constant nonparametric estimator of the conditional or joint probability. The best data grid is searched using a Bayesian model selection approach and efficient combinatorial algorithms.

\subsection{Application to Functional Data}

We propose to represent the collection $\mathcal{C}$ of $n$ curves as a unique data set with $m=\sum_{i=1}^n m_i$ observations and three variables, $C$ to store the curve identifier, $X$ and $Y$ for the point coordinates.
We can apply the data grid models in the unsupervised setting to estimate the joint density $p(C, X, Y)$ between the three variable.
The curve variable $C$ is grouped into clusters of curves, whereas each point dimension $X$ and $Y$ is discretized into intervals. The cross-product of these univariate partitions forms a data grid of cells, whith a peacewise constant joint density estimation per triplet of curve cluster, $X$ interval  and $Y$ interval.
As $p(X, Y | C) = \frac {p(C, X, Y)} {p(C)}$, this can also be interpreted as an estimator of the joint density between the point dimensions, which is constant per cluster of curves. This means that similar curves with respect to the joint density of their point dimensions will tend to be grouped into the same clusters. It is noteworthy that the $(X,Y)$ discretization is optimized globally for the set of all curves and not locally per cluster as in \cite{HebrailEtAl10}.

We introduce in Definition~\ref{FunctionalDataClusteringModel} a family of functional data clustering models, based on clusters of curves, intervals for each point dimension, and a multinomial distribution of all the points on the cells of the resulting data grid. 
\begin{definition}
\label{FunctionalDataClusteringModel}
A functional data clustering model is defined by:
\begin{itemize} 
	\item a number of clusters of curves,
	\item a number of intervals for each point dimension,
	\item the repartition of the curves into the clusters of curves,
	\item the distribution of the points of the functional data set on the cells of the data grid,
	\item the distribution of the points belonging to each cluster on the curves of the cluster.\\
\end{itemize}
\end{definition}

\begin{notation}
$\phantom{x}$
\begin{itemize}
  \item $\mathcal{C}$: collection of curves, size $n=|\mathcal{C}|$.
  \item $\mathcal{P}$: point data set containing all points of $\mathcal{C}$ using 3 variables, size $m=|\mathcal{P}|$.
	\item $C$: curve variable
	\item $X, Y$: variables for the point dimensions
	\item $k_C$: number of clusters of curves
	\item $k_X, k_Y$: number of intervals for variables $X$ and $Y$
	\item $k = k_C k_X k_Y$: number of cells of the data grid
	\item $n_{i_C}$: number of curves in cluster $i_C$
	\item $m_i$: number of points for curve $i$
	\item $m_{i_C}$: cumulated number of points for curves of cluster $i_C$
	\item $m_{j_X}$, $m_{j_Y}$: cumulated number of points for intervals $j_X$ of $X$ and $j_Y$ of $Y$
	\item $m_{i_C j_X j_Y}$: cumulated number of points for cell $(i_C, j_X, j_Y)$ of the data grid
\end{itemize}
\end{notation}
We assume that the numbers of curves $n$ and points $m$ are known in advance and we aim at modeling the joint distribution of the $m$ points on the curve and the point dimensions. In order to select the best model, we apply a Bayesian approach, using the prior distribution on the model parameters described in Definition~\ref{FunctionalDataClusteringPrior}.

\begin{definition} 
\label{FunctionalDataClusteringPrior}
The prior for the parameters of a functional data clustering model are chosen hierarchically and uniformly at each level: 
\begin{itemize}
\item	the numbers of clusters $k_C$ and of intervals $k_X, k_Y$ are independent from each other, and uniformly distributed between $1$ and $n$ for the curves, between $1$ and $m$ for the point dimensions,
\item for a given number $k_C$ of clusters, every partitions of the $n$ curves into $k_C$ clusters are equiprobable,
\item for a model of size $(k_C, k_X, k_Y)$, every distributions of the $m$ points on the $k=k_C k_X k_Y$ cells of the data grid are equiprobable,
\item	for a given cluster of curves, every distributions of the points in the cluster on the curves of the cluster are equiprobable,
\item for a given interval of $X$ (resp. $Y$), every distributions of the ranks of the $X$ (resp. $Y$) values of points are equiprobable.
\end{itemize}
\end{definition}

Taking the negative log of the posterior probability of a model given the data, this provides the evaluation criterion given in Theorem \ref{FunctionalDataClusteringTheorem}, which specializes to functional data clustering the unsupervised data grid model general criterion \cite{BoulleHOPR10}.

\begin{theorem}
\label{FunctionalDataClusteringTheorem}
A functional data clustering model $M$ distributed according to a uniform hierarchical prior is Bayes optimal if the value of the following criteria is minimal 
\begin{equation}
\label{eq:crit}
\begin{split}
c(M) =& -log(P(M))-log(P(\mathcal{P}|M)) \\
=& \log n + 2 \log m + \log B(n, k_C)\\
 &+ \log \binom {m + k - 1} {k - 1}
  + \sum_{i_C=1}^{k_C} {\log \binom {m_{i_C} + n_{i_C} - 1} {n_{i_C} - 1}}\\
 &+ \log m! - \sum_{i_C = 1}^{k_C} {\sum_{j_X = 1}^{k_X} {\sum_{j_Y = 1}^{k_Y} {\log m_{i_C j_X j_Y}!}}}\\
 &+ \sum_{i_C = 1}^{k_C} {\log m_{i_C}!} 
  - \sum_{i = 1}^{n} {\log m_i!}\\
 &+ \sum_{j_X = 1}^{k_X} {\log m_{j_X}!}
 + \sum_{j_Y = 1}^{k_Y} {\log m_{j_Y}!}
\end{split}
\end{equation}

\end{theorem}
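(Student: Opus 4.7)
The plan is to invoke Bayes' rule, $P(M\mid\mathcal{P})\propto P(M)\,P(\mathcal{P}\mid M)$, so that minimizing the negative log-posterior amounts to minimizing $c(M)=-\log P(M)-\log P(\mathcal{P}\mid M)$. Because Definition~\ref{FunctionalDataClusteringPrior} prescribes uniform distributions at every level of the hierarchy, each factor entering the prior or the likelihood is of the form $1/N$ where $N$ counts a combinatorial class; negating the logarithm turns the product into a sum of $\log N$ terms which, once collected level by level, should reproduce exactly~\eqref{eq:crit}.

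For the prior term $-\log P(M)$, I would process the first four bullets of Definition~\ref{FunctionalDataClusteringPrior} in order. The independence and uniformity of $k_C$, $k_X$, $k_Y$ on $\{1,\dots,n\}$, $\{1,\dots,m\}$, $\{1,\dots,m\}$ respectively contributes $\log n + 2\log m$. The uniform distribution over partitions of $n$ curves into $k_C$ clusters contributes $\log B(n,k_C)$, with $B(n,k_C)$ the cardinality convention used in~\cite{BoulleHOPR10}. Conditionally on $k=k_Ck_Xk_Y$, the uniform distribution on cell-count vectors $(m_{i_Cj_Xj_Y})$ summing to $m$ is uniform over the $\binom{m+k-1}{k-1}$ nonnegative compositions of $m$ into $k$ parts; conditionally on each cluster size $n_{i_C}$, the per-curve count vector is uniform over the $\binom{m_{i_C}+n_{i_C}-1}{n_{i_C}-1}$ compositions of $m_{i_C}$ into $n_{i_C}$ parts. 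These four contributions reproduce the first three lines of~\eqref{eq:crit}.

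For the likelihood term $-\log P(\mathcal{P}\mid M)$, I would argue that once $M$ is fixed, three independent uniform distributions fully determine the data. Given the cell counts, the concrete assignment of the $m$ points to cells is one specific arrangement among the $m!/\prod m_{i_Cj_Xj_Y}!$ equiprobable ones (multinomial coefficient), which contributes $\log m!-\sum_{i_C,j_X,j_Y}\log m_{i_Cj_Xj_Y}!$. Within each cluster $i_C$, the assignment of its $m_{i_C}$ points to specific curves given the per-curve counts $m_i$ contributes $\log m_{i_C}!-\sum_{i\in i_C}\log m_i!$; summing over clusters yields $\sum_{i_C}\log m_{i_C}!-\sum_{i=1}^{n}\log m_i!$. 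Finally, by the last bullet of Definition~\ref{FunctionalDataClusteringPrior}, within each $X$-interval (respectively $Y$-interval) the observed rank sequence is one specific permutation among $m_{j_X}!$ (respectively $m_{j_Y}!$) equiprobable ones, producing the two trailing sums of~\eqref{eq:crit}.

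The only real obstacle I anticipate is the bookkeeping: making sure each combinatorial object is charged exactly once and on the right side of the Bayes factorization. Concretely, the cell counts $m_{i_Cj_Xj_Y}$ and per-curve counts $m_i$ must be treated as integer-valued \emph{model parameters} (priced in the prior), while the concrete point-to-cell assignments, point-to-curve assignments, and within-interval rank orderings constitute the \emph{data given the model} (priced in the likelihood). Once this split is fixed, every line of~\eqref{eq:crit} follows from the elementary identity $-\log(1/N)=\log N$ applied to the relevant counting class; no non-trivial probabilistic step is required.
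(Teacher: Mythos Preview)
Your proposal is correct and follows exactly the approach the paper takes: the paper does not give a formal proof but simply states that the criterion specializes the general unsupervised data grid criterion of~\cite{BoulleHOPR10}, and then explains line by line that the first two lines of~\eqref{eq:crit} are the prior terms (numbers of parts, partition of curves, multinomial parameters for cell counts and per-curve counts) while the last three lines are the likelihood terms (multinomial placement of points in cells, placement of cluster points on curves, and rank orderings within each $X$ and $Y$ interval). Your derivation is precisely this decomposition made explicit, including the correct identification of which counts are priced in the prior versus the likelihood, so nothing further is needed.
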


$B(n,k)$ is the number of divisions of $n$ elements into $k$ subsets (with eventually empty subsets). When $n=k$, $B(n,k)$ is the Bell number. In the general case, $B(n,k)$ can be written as $B(n,k) = \sum_{i=1}^k {S(n,i)}$, where $S(n,i)$ is the Stirling number of the second kind \cite{AbramowitzEtAl70}, which stands for the number of ways of partitioning a set of $n$ elements into $i$ nonempty subsets.

As negative log of probabilities are coding lengths, the model selection technique is similar to a minimum description length approach \cite{Rissanen78}.
The first line in Formula~\ref{eq:crit} relates to the prior distribution of the numbers of cluster $k_C$ and of intervals $k_X$ and $k_Y$, and to the specification of the partition of the curves into  clusters.
The second line represents the specification of the parameters of the multinomial distribution of the $m$ points on the $k$ cells of the data grid, followed by the specification of the multinomial distribution of the points of each cluster on the curves of the cluster.
The third line stands for the likelihood of the distribution of the points on the cells, by the mean of a multinomial term.
The last line corresponds to the likelihood of the distribution of the points of each cluster on the curves of the cluster, followed by the likelihood of the distribution of the ranks of the $X$ values (resp. $Y$ values) in each interval.

\subsection{Optimization Algorithm}

The optimization heuristics have practical scaling properties, with O$(m)$ space complexity and O$(m \sqrt m \log m)$ time complexity.
The main heuristic is a greedy bottom-up heuristic, which starts with a fine grained model, with a few points per interval on $X$ and $Y$ and a few curves per cluster, considers all the merges between clusters and adjacent intervals, and performs the best merge if the criterion decreases after the merge, as detailed in Algorithm~\ref{alg:gbum} \\
This heuristic is enhanced with post-optimization steps (moves of interval bounds and of curves across clusters), and embedded into the variable neighborhood search (VNS) meta-heuristic \cite{HansenEtAl01}, which mainly benefits from multiple runs of the algorithm with different initial random solutions. 

\begin{algorithm}
\caption{Greedy Bottom Up Merge Heuristic}
\label{alg:gbum}
\begin{algorithmic}
	\Require $M$ (initial solution)
	\Ensure $M^*\mbox{ ; } c(M^*) \leq c(M)$
	\State $M^* \gets M$
	\While{$\mbox{solution is improved}$}
		\State $M' \gets M^*$
		\ForAll {merge $u$ between 2 clusters or adjacent intervals of X or Y}
			\State $M^+ \gets M^*+u$
			\If {$c(M^+)<c(M')$}
				\State $M' \gets M^+$
			\EndIf
		\EndFor
		\If {$c(M')<c(M^*)$}
			\State $M^* \gets M'$ (improved solution)
		\EndIf
	\EndWhile
\end{algorithmic}
\end{algorithm}

The optimization algorithms summarized above have been extensively evaluated in \cite{BoulleHOPR10}, using a large variety of artificial data sets, where the true data distribution is known.
Overall, the method is both resilient to noise and able to detect complex fine grained patterns. It is able to approximate any data distribution, provided that there are enough instances in the train data sample.

\section{Agglomerative Hierarchical Clustering}
\label{sec:AHC}
The model carried out by the method detailed in the section~\ref{sec:MODL} is optimal according to the criterion introduced in Theorem~\ref{FunctionalDataClusteringTheorem}.
This parameter-free solution allows to track fine and relevant patterns without over-fitting. 
This provides a suitable initial solution to lead an exploratory analysis.
Still, this initial solution may be too fine for an easy interpretation.
We propose here a post-processing technique which aims at simplifying the clustering while minimizing the loss of information.
This allows to explore the retrieved patterns at any granularity, up to the finest model, without any user parameter.

We first study the impact of a merge on the criterion, then focus on the properties of the proposed dissimilarity measure and finally describe the agglomerative hierarchical clustering heuristic.
It is noteworthy than the same modeling criterion is optimized both for building the initial clustering and for aggregating the clusters in the agglomerative heuristic.

%

\subsection{The Cost of Merging two Clusters}
Let $M_{1_C,2_C}$ and $M_{\gamma_C}$ be two clustering models, the first one is the model before the merge of the clusters $1_C$ and $2_C$, the second one is the model after the merge, that yields a new cluster $\gamma_C=1_C\cup2_C$. We denote $\Delta c(1_C,2_C)$ the cost of the merge of $1_C$ and $2_C$, defined as: 
\begin{eqnarray*}
\Delta c(1_C,2_C) = c(M_{\gamma_C}) - c(M_{1_C,2_C})
\end{eqnarray*}
It results from Theorem~\ref{FunctionalDataClusteringTheorem} that the clustering model $M_{\gamma_C}$ is a less probable MODL explanation of the data set $\mathcal{P}$ than $M_{1_C,2_C}$ according to a factor based on $\Delta c(1_C,2_C)$. 
\begin{eqnarray}
p(M_{\gamma_C}|\mathcal{P})=e^{-\Delta c(1_C,2_C)}p(M_{1_C,2_C}|\mathcal{P})
\label{eq:varcrit1}
\end{eqnarray}
We focus on the asymptotic behavior of $\Delta c(1_C,2_C)$ when the number of data points $m$ tends to infinity.

\begin{theorem}The criterion variation is asymptotically equal to a weighted sum of the Kullback-Leibler divergences from the clusters $1_C$ and $2_C$ to $\gamma_C$, estimated on the $k_X \times k_Y$ bivariate discretization.

\begin{equation}
\label{eq:varcrit2}
\begin{split}
\Delta c(1_C,2_C) =& m_{1_C}D_{KL}(1_C||\gamma_C)+m_{2_C}D_{KL}(2_C||\gamma_C)+O(log(m_{\gamma_C})) 
\end{split}
\end{equation}
\end{theorem}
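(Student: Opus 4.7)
The plan is to compute the difference $\Delta c(1_C,2_C)$ term by term using the explicit criterion in~\eqref{eq:crit}, isolate the contributions that shrink to $O(\log m_{\gamma_C})$, and show that what remains rearranges into the advertised sum of Kullback--Leibler divergences via Stirling's formula.

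First I would identify the invariants of the merge. Since the $X$ and $Y$ partitions and the per-curve point counts are untouched, the terms $\log n$, $2\log m$, $\log m!$, $\sum_i \log m_i!$, $\sum_{j_X}\log m_{j_X}!$ and $\sum_{j_Y}\log m_{j_Y}!$ cancel outright. What remains are (a)~the Bell number $\log B(n,k_C)$, (b)~the grid-cell multinomial parameter $\log\binom{m+k-1}{k-1}$, (c)~the per-cluster point/curve parameters $\sum_{i_C}\log\binom{m_{i_C}+n_{i_C}-1}{n_{i_C}-1}$, (d)~the cell likelihood $-\sum_{i_C,j_X,j_Y}\log m_{i_C j_X j_Y}!$, and (e)~the cluster likelihood $\sum_{i_C}\log m_{i_C}!$. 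For (a)--(c) I would bound the changes by $O(\log m_{\gamma_C})$: the ratio $B(n,k_C-1)/B(n,k_C)$ is polynomial in $n$, and with $k_X,k_Y$ fixed the binomial coefficients in (b) and (c) change by a logarithmic factor when $k_C$ drops by one and two $(m_{i_C},n_{i_C})$ pairs coalesce. These all fold into the $O(\log m_{\gamma_C})$ remainder.

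The heart of the argument is (d) and (e). Apply Stirling's expansion $\log N! = N\log N - N + O(\log N)$ to every factorial that changes. The linear $-N$ terms cancel exactly thanks to the conservation laws $m_{\gamma_C}=m_{1_C}+m_{2_C}$ and $m_{\gamma_C,j_X,j_Y}=m_{1_C,j_X,j_Y}+m_{2_C,j_X,j_Y}$; the $O(\log N)$ residues are a finite sum (there are only $2k_Xk_Y+3$ modified factorials) and contribute $O(\log m_{\gamma_C})$. Rewriting $m_{i_C,j_X,j_Y}=m_{i_C}\,p(j_X,j_Y\mid i_C)$, the $N\log N$ pieces coming from (d) consolidate into $-m_{i_C}H(i_C)+m_{i_C}\log m_{i_C}$, and the $N\log N$ pieces from (e) supply precisely the $+m_{\gamma_C}\log m_{\gamma_C}-m_{1_C}\log m_{1_C}-m_{2_C}\log m_{2_C}$ needed to cancel the $m_{i_C}\log m_{i_C}$ contributions from~(d). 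What survives is the entropy gap
\begin{equation*}
m_{\gamma_C}H(\gamma_C)-m_{1_C}H(1_C)-m_{2_C}H(2_C)+O(\log m_{\gamma_C}).
\end{equation*}

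To finish, I would verify by direct computation, using $p(j_X,j_Y\mid\gamma_C)=\bigl(m_{1_C}p(j_X,j_Y\mid 1_C)+m_{2_C}p(j_X,j_Y\mid 2_C)\bigr)/m_{\gamma_C}$, that the entropy gap is identically $m_{1_C}D_{KL}(1_C\Vert\gamma_C)+m_{2_C}D_{KL}(2_C\Vert\gamma_C)$; this is a one-line expansion of the definition of $D_{KL}$. The principal obstacle is the careful control of (a): the Bell-number ratio is the only non-Stirling combinatorial term and needs the bound $B(n,k_C)\le n^{k_C}$ (or an equivalent) to guarantee that the contribution is genuinely $O(\log n)\subseteq O(\log m_{\gamma_C})$; everything else is Stirling bookkeeping and algebraic rearrangement.
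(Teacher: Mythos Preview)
Your proposal is correct and follows exactly the strategy the paper sketches: cancel the merge-invariant terms, bound the surviving prior pieces (Bell number, the two multinomial-parameter binomials) by $O(\log m_{\gamma_C})$, then apply Stirling to the factorial likelihood terms so that the $-N$ contributions cancel via the point-count conservation laws and the $N\log N$ contributions collapse to the entropy gap, which is the Jensen--Shannon-type identity yielding $m_{1_C}D_{KL}(1_C\Vert\gamma_C)+m_{2_C}D_{KL}(2_C\Vert\gamma_C)$. The paper's own proof is only a two-sentence outline (``prior terms vanish or are $O(\log m_{\gamma_C})$; Stirling on the likelihood gives the KL sum''), so you have in fact supplied more detail than the authors, but along the identical route.
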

\begin{proof}
The full proof is left out for brevity. Mainly, the computation of $\Delta c(1_C,2_C)$ makes some prior terms (2 first lines of Formula~\ref{eq:crit}) vanish and bounds the other ones by $O(log(m_{\gamma_C}))$ terms. Then, using the Stirling approximation $log(m!)=m(log(m)-1)+O(log(m))$, the variation of the likelihood (the two last lines of Formula~\ref{eq:crit}) can be rewritten as a weighted sum of Kullback-Leibler divergences.
\end{proof}

\subsection{The Cost of a Merge as a Dissimilarity Measure}

As the criterion defined in Theorem~1 is used to find the best model, we naturally chose it to evaluate the quality of the clustering. When two clusters are merged, the criterion decreases and its resulting variation can be viewed as dissimilarity between both clusters. When the number of points tends to infinity, the dissimilarity measure asymptotically converges to a weighted sum of Kullback-Leibler divergence (see Theorem~2). This divergence is a non symmetric measure of the difference between two distributions \cite{CoverEtAl91}. 
The variation of the criterion $\Delta c$ has some interesting properties. First, it is symmetrical, $\Delta(1_C,2_C)=\Delta(2_C,1_C)$. Then, $\Delta c(1_C,2_C)$ is asymptotically non-negative since the Kullback-Leibler divergence is also \cite{CoverEtAl91}.
The weights have an important impact on the merge in the case of unbalanced clusters. A trade-off is achieved between merging two balanced clusters with similar distributions and merging two different clusters, one of them having a tiny weight. The best merge is the one with the least loss of information, as $c(M)$ can be interpreted as the total coding length of the clustering model plus the data points given the model.

\subsection{The Agglomerative Hierarchical Classification}

The principle of the agglomerative clustering is to merge successively the clusters in order to build a tree called dendrogram. The usual dissimilarity measures for the dendrogram are based on Euclidean distances (Single-Linkage, Complete-Linkage, Ward ...). Here we build a dendrogram using the criterion variation $\Delta c$. Due to the properties of this dissimilarity measure, the resulting dendrogram is well-balanced. Indeed, given the trade-off between merging similarly distributed clusters and merging tiny with large clusters, we obtain clusters with comparable sizes at each level of hierarchy.

Let us notice that during the agglomerative process, the best merge can relate either to the cluster variable $C$ or to the points dimensions $X$ or $Y$. Therefore, the granularity of the representation of the curves coarsens as the number of clusters decreases. As a consequence, the dissimilarity measure between two clusters of a partition ``coarsens'' together with the coarsening of the other partitions. This makes sense since fewer clusters in the partition need a less discriminative similarity measure to be distinguished. 
It is noteworthy that during the agglomerative process, partitions are coarsened but not re-optimized by locally moving the bounds of the intervals. 
Although this may be sub-optimal, this allows to ease the exploratory analysis by using the same family of nested intervals at any model granularity.
%
%
%
%
%
%
\newpage
\section{Experiments}
\label{sec:results}
In this section, we first highlight properties of our approach using an artificial data set and then apply it on a real-life data set, next we successively merge the clusters and finally show what kind of exploratory analysis can be performed.

\subsection{Experiments on an artificial data set}

 A variable $z$ is sampled from an uniform distribution: $Z \sim \mathcal{U}(-1,1)$. $\varepsilon_i$ denotes a white Gaussian noise: $E \sim \mathcal{N}(0,0.25)$. Let us consider the four following distributions:
\begin{itemize}
\item $f_1 : x = z + \varepsilon_x \mbox{ , } y = z + \varepsilon_y $
\item $f_2 : x = z + \varepsilon_x \mbox{ , } y = -z + \varepsilon_y $
\item $f_3 : x = z + \varepsilon_x \mbox{ , } y = \alpha z + \varepsilon_y$ \\ $\phantom{f_3 : }\mbox{ with } \alpha \in \{-1,1\}$ \\ $\phantom{f_3 : } \mbox{ and } p(\alpha=-1)=p(\alpha=1)$
\item $f_4 : x = (0.75 + \varepsilon_x) cos(\pi(1+z)) \mbox{ ,}$ \\ $\phantom{f_3 : :} y = (0.75 + \varepsilon_y) sin(\pi(1+z))$
\end{itemize}

\begin{figure}[!ht]
  \centering
  \subfloat[$f_1$]{\includegraphics[width=0.38\textwidth]{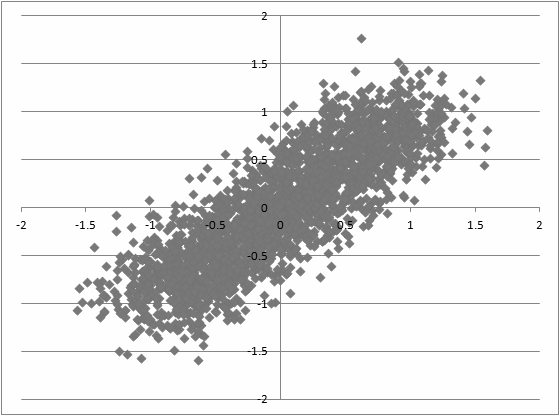}}              
  \subfloat[$f_2$]{\includegraphics[width=0.38\textwidth]{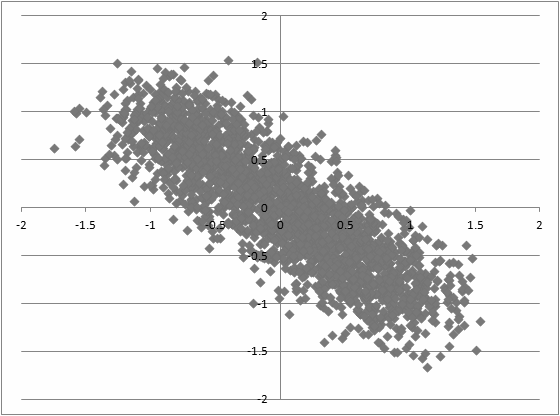}} 
  \\
    \subfloat[$f_3$]{\includegraphics[width=0.38\textwidth]{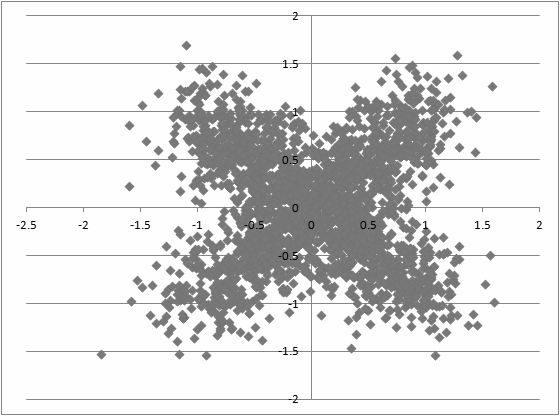}}              
  \subfloat[$f_4$]{\includegraphics[width=0.38\textwidth]{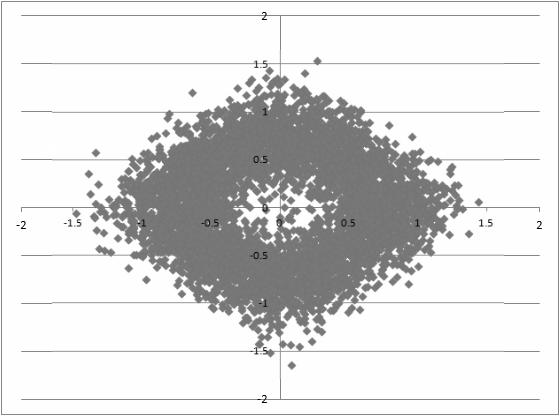}} 
\caption{Artificially generated distributions.}
  \label{Fig:artificial_Pattern}
\end{figure}
We generate a collection of $40$ curves using each distribution defined previously ($10$ curves per distribution). We generate a data set $\mathcal{P}$ of $10^5$ points. Each point is a triple of values with a randomly chosen curve (among 40), a $x$ and a $y$ value generated according to the distribution related to the curve.

We apply our functional data clustering method introduced in Section \ref{sec:MODL} on subsets of $\mathcal{P}$ of increasing sizes. The experiment is running 10 times per subset of points that are resampled each time. The graph on Figure~\ref{Fig:clustersNumber} displays the average number of clusters and the number of X and Y intervals for a given number of points $m$. For very small subsets (below 400 data points), there are not enough data to discover significant patterns, and our method produces one single cluster of curves, with one single interval for the $X$ and $Y$ variables. From 400 data points, the numbers of clusters and intervals start to grow. Finally with only 25 points per curve on average, that is 1000 points in the whole point data set, our method recovers the underlying pattern and produces four clusters of curves related to the $f_1$, $f_2$, $f_3$ and $f_4$ distributions.

Despite the method retrieved the actual number of clusters, below 2000 data points, the clusters may not be totally pure and some curves misplaced into clusters. In our experiments, for $1000$ data points, $2\%$ of the curves are misplaced on average, while for $2000$ points, all the curves are systematically placed in their actual cluster.

It is noteworthy that by growing the size of the subset beyond 2000 data points, the number of retrieved patterns is constant and equal to four. By contrast, the number of intervals grows with the number of data points. This shows the good asymptotic behaviour of the method: it retrieves the true number of patterns and exploits the growing number of data to better approximate the pattern shapes.
\begin{figure}[!ht]
  \centering
  \includegraphics[width=0.8\textwidth]{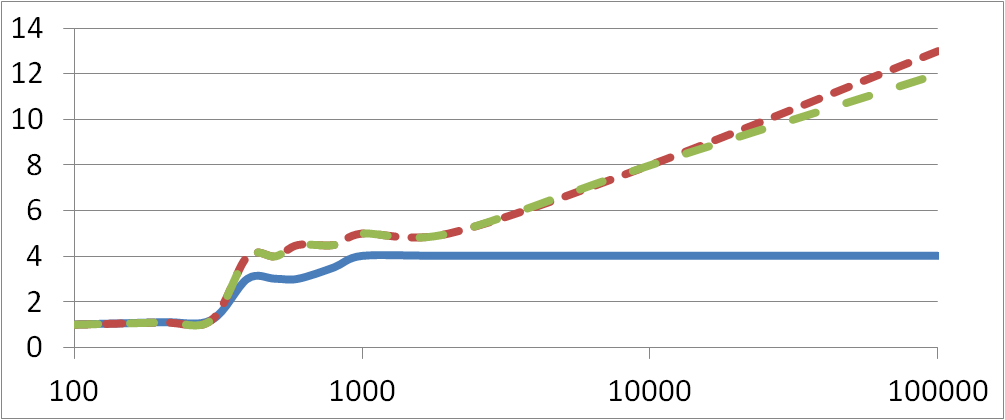}
\caption{Number of clusters (solid line), number of $X$ intervals (tight dotted line) and number of $Y$ intervals (spaced dotted line) for a given number of data points $m$. }
  \label{Fig:clustersNumber}
\end{figure}

Regarding the results of the experiments on this data set, it is noteworthy that MODL does not require the same point locations for each curve. This may be an usefull property to make a clustering of functionnal data for which the measurement have not been recorded at regular intervals. Moreover, beyond the clustering of functional data, our method is able to deal with distributions. Thus, it is possible to detect clusters of multimodal distributions like the ones generated using $f_3$ and $f_4$.

\subsection{Analysis of a power consumption data set}

We use the data set \cite{HebrailEtAl10} which consists in the electric power consumption recorded in a personal home during almost one year ($349$ days). Each curve consists in 144 measurements which give the power consumption of a day at a $10$ minutes sampling rate. There are $50{,}256$ data points and three features: the time of the measure $X$, the power measure $Y$ and the day identifier $C$. The study of this data set aims at grouping the days according to the characteristic of the power consumption of each day. First, the optimal model is computed using the MODL approach. Finally the approach is compared to that of \cite{HebrailEtAl10}. \\

\textit{The MODL-Optimal Discretization.} The optimal clustering consists in a data grid defined by $57$ clusters, $7$ intervals on $X$ and $10$ on $Y$. This means that the $349$ recorded days have been grouped into 57 clusters, each day has been discretized into $7$ time segments and the power measures into $10$ power segments. This result highlights some characteristic days, such as the workdays, the days off or the days when nobody is at home.
The summarized prototypes, represented by piecewise constant lines, show the average power consumption per time segment. The conditional probabilities of the power segments given the time segments are represented by grey cells, where the grey level shows the related conditional probability. The first representation has been chosen in order to simplify the reading of the curve, and the second to highlight some interesting phenomena such as the multimodal distributions of data points within the time segments.\\

\begin{figure}[!ht]
  \centering
  \subfloat[]{\includegraphics[width=0.48\textwidth]{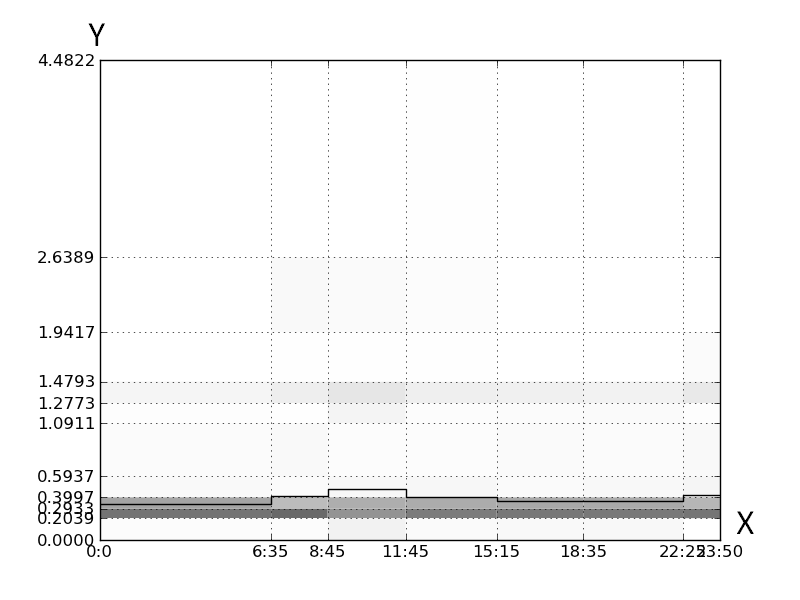}}              
  \subfloat[]{\includegraphics[width=0.48\textwidth]{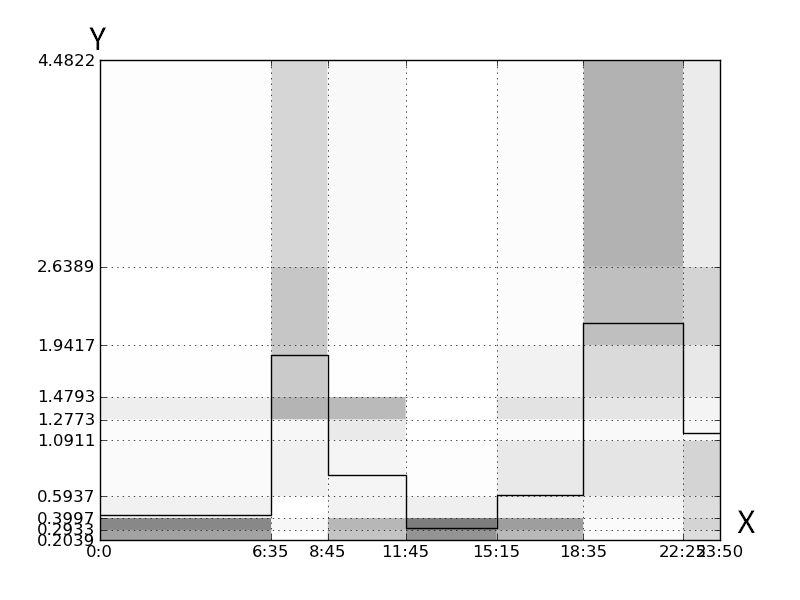}} 
\caption{Two examples among the 57 clusters, the plots display the summarized prototypes and the conditional probabilities represented by darkened cells. Figure~(a) represents the largest cluster, typifying days where the power consumption is very low and almost constant ; the residents were probably not at home. Figure~(b), that is the second largest cluster, shows a workday with a low consumption during the night and the office hours, and with peaks in the morning and evening.}
  \label{Fig:57clusters}
\end{figure}

\textit{Multimodal distributions.} In Figure \ref{Fig:57clusters}.(b), we notice that the prototype is located between two dark cells for the third time segment. This means that the majority of the data points have been recorded in the higher and the lower power segments but rarely in the interval where the prototype is for this time segment. Thus, a multimodal distribution of the data points on this time segment is highlighted, which is confirmed by Figure \ref{Fig:stackcurves}.(b).
Let us notice that \ref{Fig:57clusters}.(a) is another illustration of a multimodal distribution for which the points are more frequent in the lower mode than in the upper one. Overall, the method extends the clustering of curves to clustering of distributions.

\begin{figure}[!ht]
  \centering
  \subfloat[]{\includegraphics[width=0.49\textwidth]{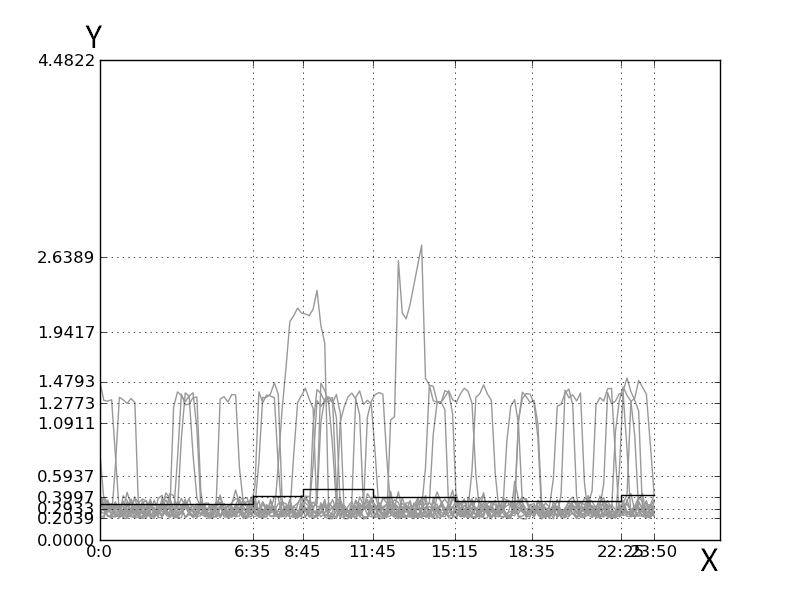}}            
  \subfloat[]{\includegraphics[width=0.49\textwidth]{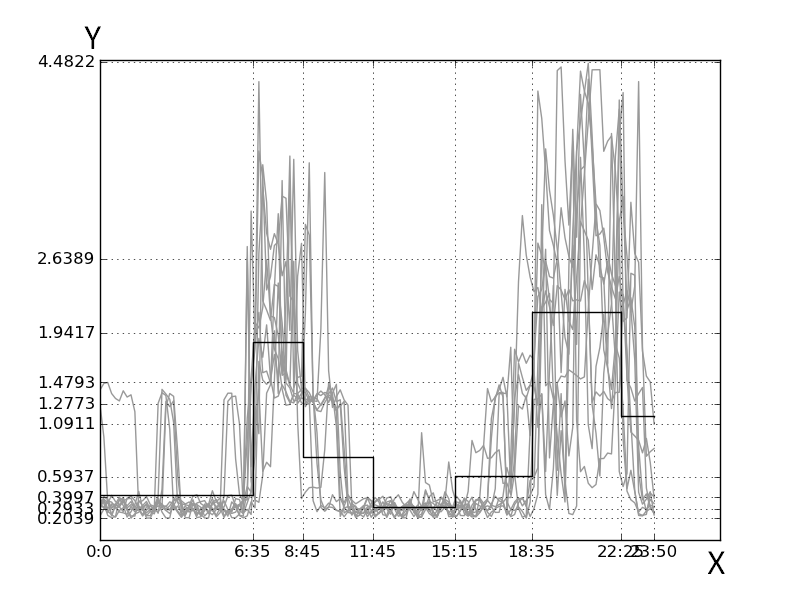}}
  \caption{Prototypes and stacked curves for the clusters of Figures \ref{Fig:57clusters} (a) and (b).}
  \label{Fig:stackcurves}
\end{figure}

\textit{Merging the Clusters.}
Whereas the finest data grid yields a rich clustering and useful information for some characteristic clusters, a more synthetic and easily interpretable view of the power consumption over the year may be desirable in some applications. That is why agglomerative merges have been performed and represented on Figure~\ref{Fig:dendro} by a dendrogram and a Pareto chart presenting the percentage of kept information as a function of the number of clusters. This measure is defined as following:
\begin{definition}
Let $M_{\emptyset}$ be the null model with one cluster of curves and one interval per point dimension, whose data grid consists in one cell containing all the points. Its properties are detailed in \cite{BoulleHOPR10}. We denote $M_{opt}$ the optimal model according to the optimization of the criterion defined in the Theorem~\ref{FunctionalDataClusteringTheorem} and $M_k$ the model resulting from successive merges until obtaining $k$ clusters. The percentage of kept information for $k$ clusters $\tau_k$ is defined as:
\begin{eqnarray*}
\tau_k=\dfrac{c(M_k)-c(M_{\emptyset})}{c(M_{opt})-c(M_{\emptyset})}
\end{eqnarray*}
\end{definition}
The dendrogram is well-balanced and the Pareto chart is concave, which allows to divide by three the number of clusters while keeping almost $90\%$ of the initial information. 

\begin{figure}[!ht]
  \centering
  \subfloat[Dendrogram]{\includegraphics[width=0.49\textwidth]{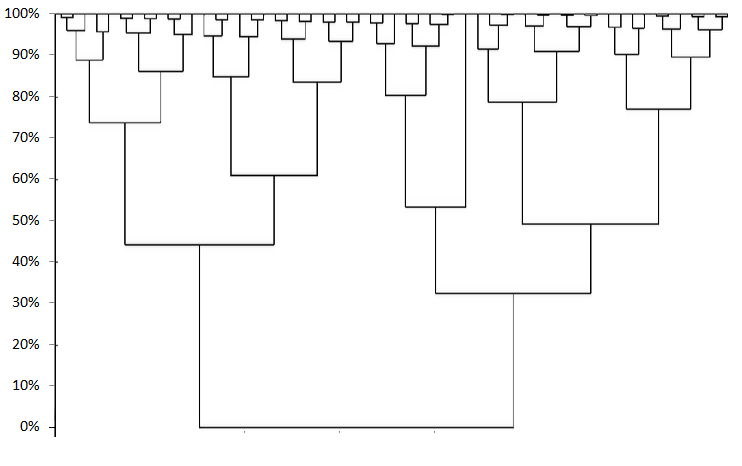}}
  \,
  \subfloat[Pareto chart]{\includegraphics[width=0.49\textwidth]{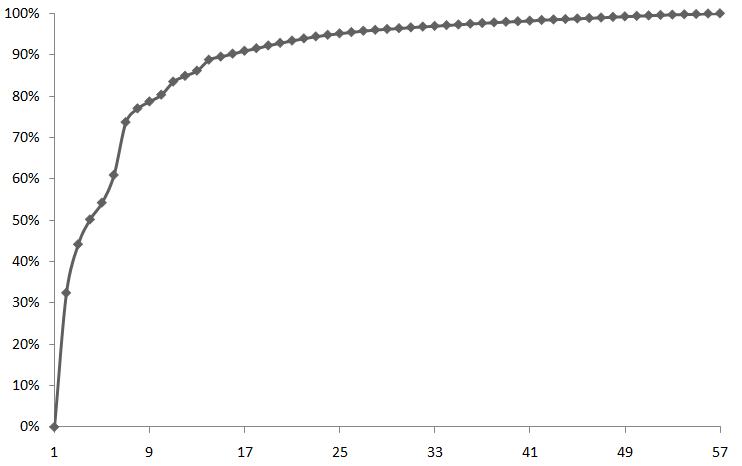}} 
  \caption{Dendrogram and Pareto chart of kept information per number of clusters.}
  \label{Fig:dendro}
\end{figure}

\textit{Comparative analysis of the modeling results.} In order to highlight the differences between the results retrieved using MODL and the approach of \cite{HebrailEtAl10}, we propose to study a simplified data grid by coarsening the optimal model until having four clusters, using the post-processing technique detailed in Section~4. By doing this, $50\%$ of the information is kept and the power consumption and the time discretizations are reduced to four intervals. Contrary to MODL, the approach of \cite{HebrailEtAl10} requires the user to specify the number of clusters and time segments. We applied therefore
their clustering technique with four clusters and a total of sixteen time intervals that are optimally distributed over the four clusters. The clusters retrieved by both approaches are displayed in Figures \ref{fig:MODL4} and \ref{fig:Hebr4}.

\begin{figure}[ht!]
	\centering
		\includegraphics[width=0.75\textwidth]{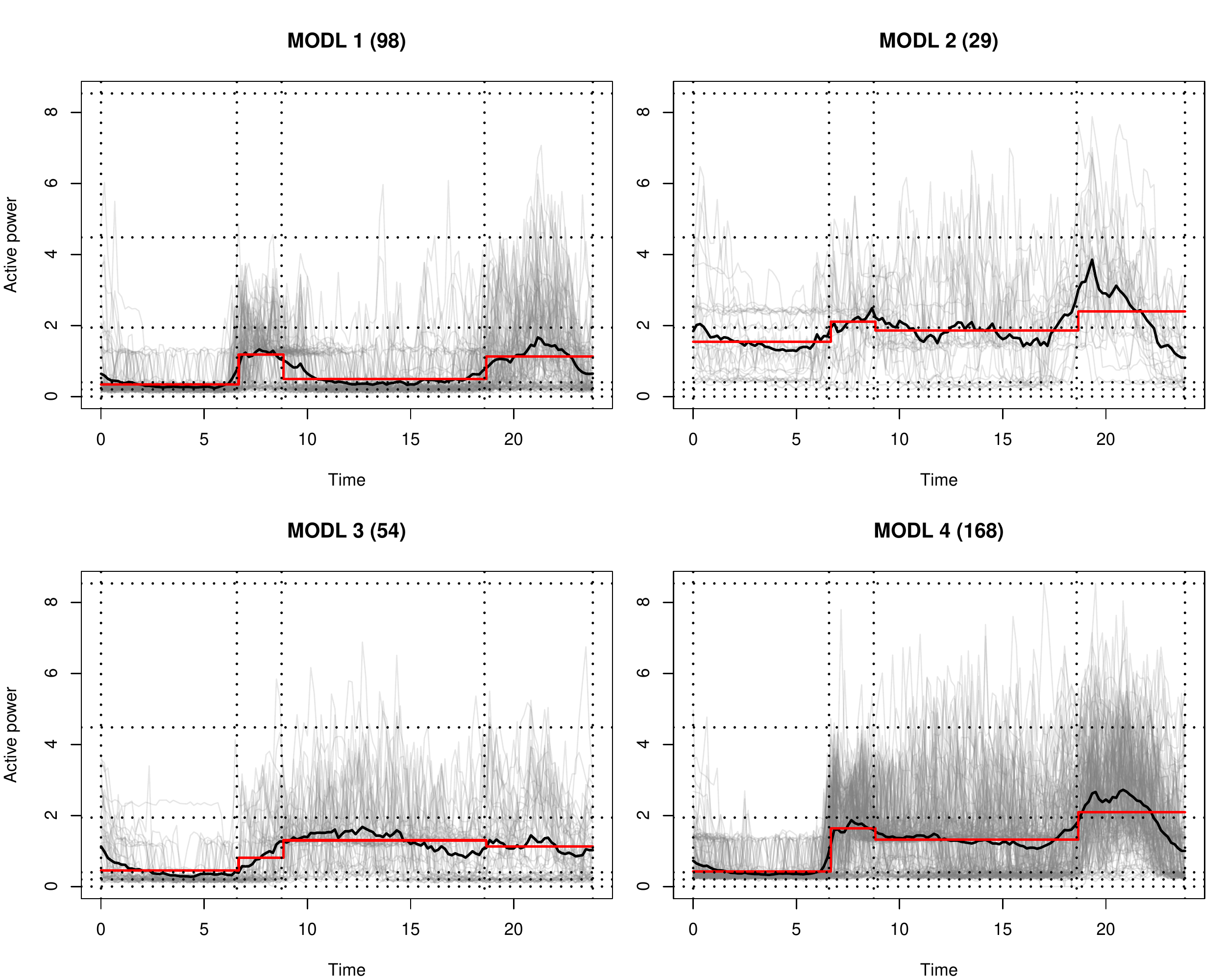}
	\caption{The four clusters of curves retrieved using MODL with the average (black line) and the prototype (red solid line) curves. The number in parenthesis above each curve refers to the number of curves in the cluster.}
	\label{fig:MODL4}
\end{figure}
\begin{figure}[hb!]
	\centering
		\includegraphics[width=0.75\textwidth]{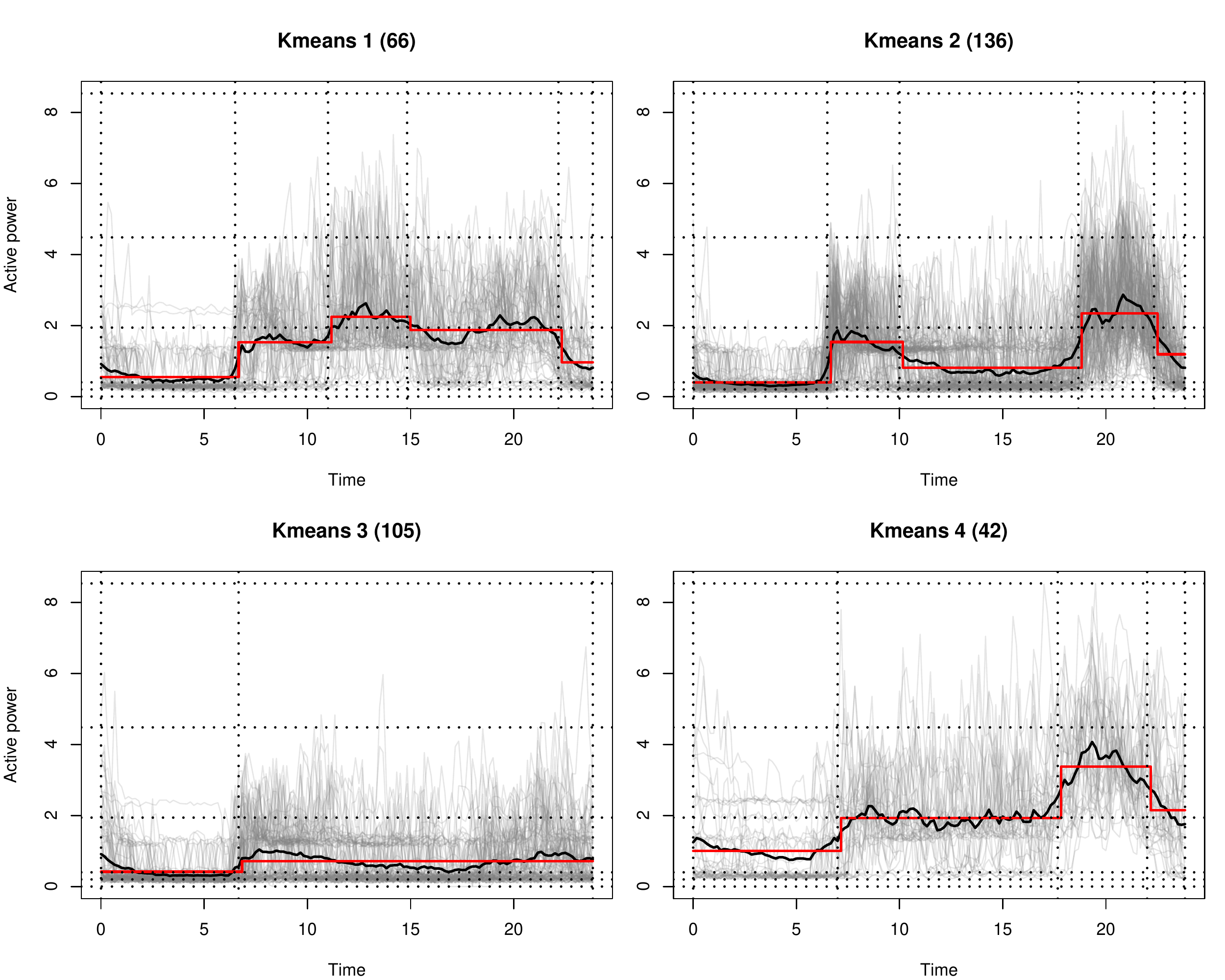}
	\caption{The four clusters of days retrieved using the approach of \cite{HebrailEtAl10} with the average (black line) and the prototype (red solid line) curves. The number in parenthesis above each curve refers to the number of curves in the cluster.}
	\label{fig:Hebr4}
\end{figure}

MODL computes a global discretization for both the time and the power consumption. Conversely, the approach of \cite{HebrailEtAl10} makes a discretization of the temporal variable only, that is different for each cluster of curves. In certain cases like the cluster $3$ of the Figure \ref{fig:Hebr4}, it may be suitable to avoid over-discretizations, and a few number of time segments is better for a local interpretation. However, having common time segments for all the clusters enables an easier comparison between the clusters. In the context of the daily power consumption, MODL enables the identification of four periods: the \textit{night} (midnight - 6.35 AM), the \textit{morning} (6.35 AM - 8.45 AM), the \textit{day} (8.45 AM - 6.35 PM) and the \textit{evening} (6.35 PM - midnight). We are then able to compare the differences in terms of power consumption between the clusters of curves for each period of the day.

The approach of \cite{HebrailEtAl10} is based on the k-means and thus minimizes the variance between the curves locally to each time segment. It is the reason why the prototype are close to the average curves in the clusters obtained by this approach. In MODL, this property is not wanted. As a consequence, the prototype and the average curves seem less correlated. MODL is based on a joint density estimation that yields more complex patterns. To highlight the differences in terms of patterns, we propose to focus on a specific time segment. The first interval (i.e the \textit{night}) found by MODL also exists in the four clusters obtained using the approach of \cite{HebrailEtAl10}. Let us focus on this time segment to investigate on the distributions of the power consumption measurements for each cluster of curves. To do that, we compute the probability density function of the power consumption variable locally to the first time segment, using a kernel density estimator \cite{sheather1991}. The results are displayed in Figures \ref{fig:MODLkde} and \ref{fig:Hebrkde}.

\begin{figure}[ht]
	\centering
		\includegraphics[width=0.75\textwidth]{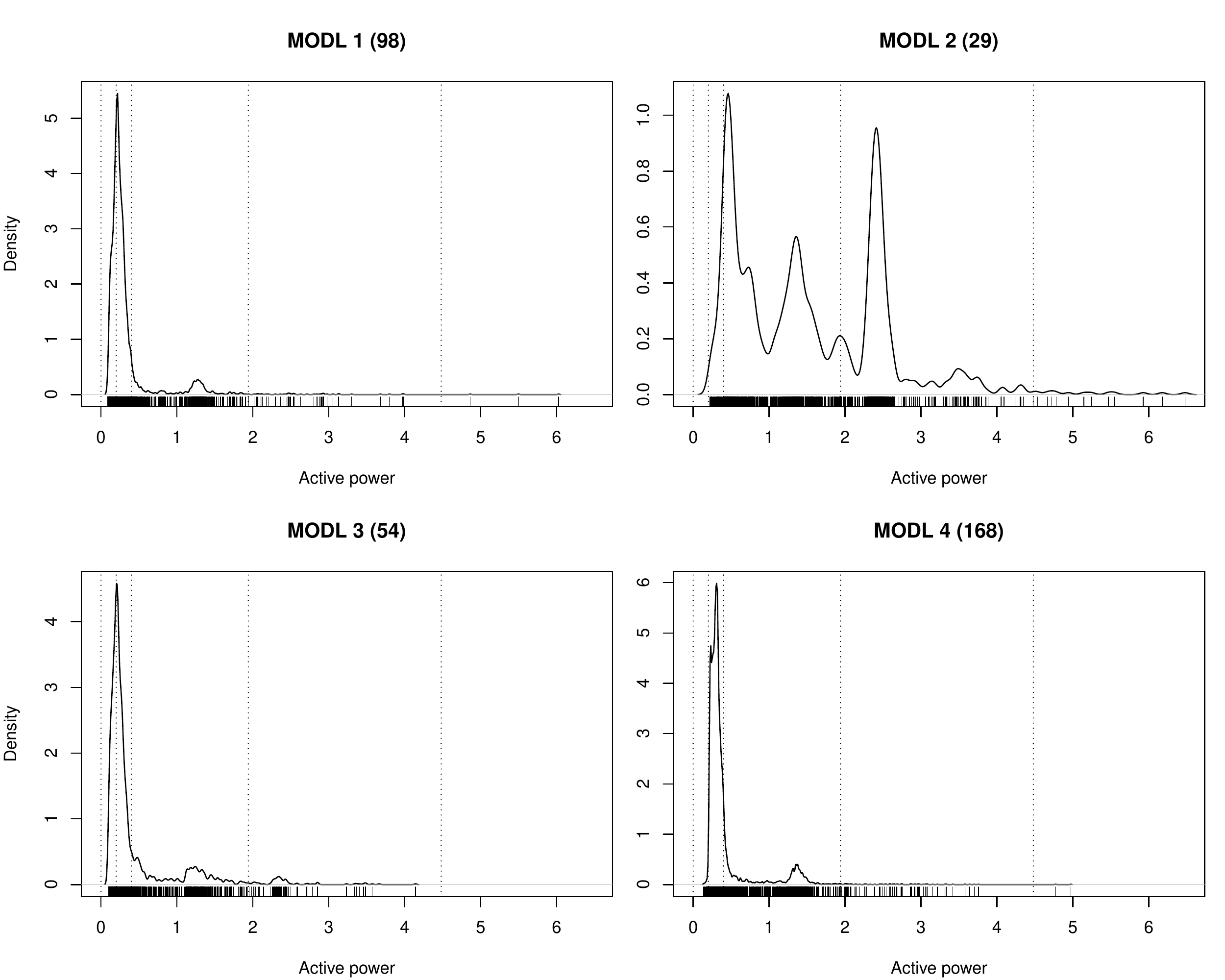}
	\caption{Kernel density estimation of the power consumption measurements between midnight and 6.35 AM for each cluster of curves retrieved using MODL.}
	\label{fig:MODLkde}
\end{figure}
\begin{figure}[ht]
	\centering
		\includegraphics[width=0.75\textwidth]{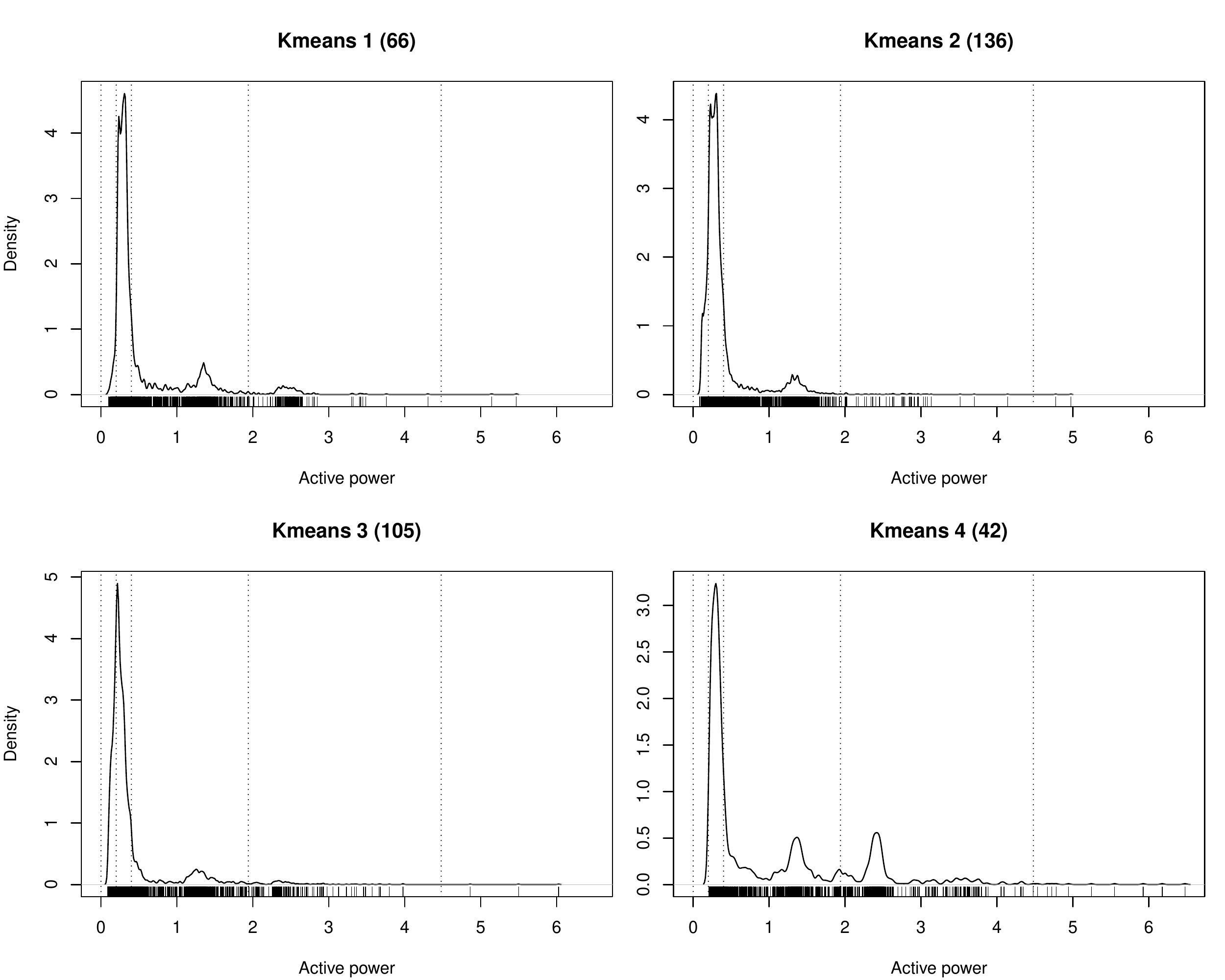}
	\caption{Kernel density estimation of the power consumption measurements between midnight and 6.35 AM for each cluster of curves retrieved using the approach of \cite{HebrailEtAl10}.}
	\label{fig:Hebrkde}
\end{figure}

The density functions for the power consumption are similar for all the four clusters retrieved by the approach of \cite{HebrailEtAl10} during the \textit{night}: for all the four clusters, we observe that the power measurements are very dense around one unique low consumption value that corresponds to the year average power consumption of the studied time segment. As for MODL, the density functions are very similar for the clusters $1$ and $3$ and also very similar to the ones displayed in Figure \ref{fig:Hebrkde}. However, the cluster $4$ is different in that the density peak has been translated to an upper power interval. Finally, the cluster $2$ highlights multimodalities with three power values around which the measurements are dense. This complex pattern has been retrieved by MODL since it based on joint density estimation ; the competing approach cannot track such patterns.

The curves of Figures \ref{fig:MODL4} and \ref{fig:Hebr4} do not clearly highlight the differences between the results. Displaying the calendar with different colors for the 4 clusters gives a more powerful reading of the differences between the results obtained using both methods. This is displayed in Figures \ref{Fig:calendarMODL} and \ref{Fig:calendarHebr}.

The calendar of the clusters retrieved using MODL (see Figure \ref{Fig:calendarMODL}) emphasizes a certain seasonality. Indeed, the way the curves are grouped highlights a link with the weather and the temperatures in France this year. The summer, from June to September, is a season when the temperatures are usually high. On the calendar, there are two clusters corresponding to this period. The rest of the year, the temperatures are lower and lead to an increase of the power consumption which is materialized by the two other clusters. It appears that in late April and early May, the temperature was exceptionally high this year: these days have been classified into the summer clusters. Interestingly, the cluster shown in Figure \ref{Fig:57clusters}.(a) where nobody was at home and the power consumption is low, has been included into a summer cluster (periods from the $23^{th}$ of February to the $2^{nd}$ of March and from the $29^{th}$ of October to the $3^{rd}$ of November).

\begin{figure}[!ht]%
\includegraphics[width=\textwidth]{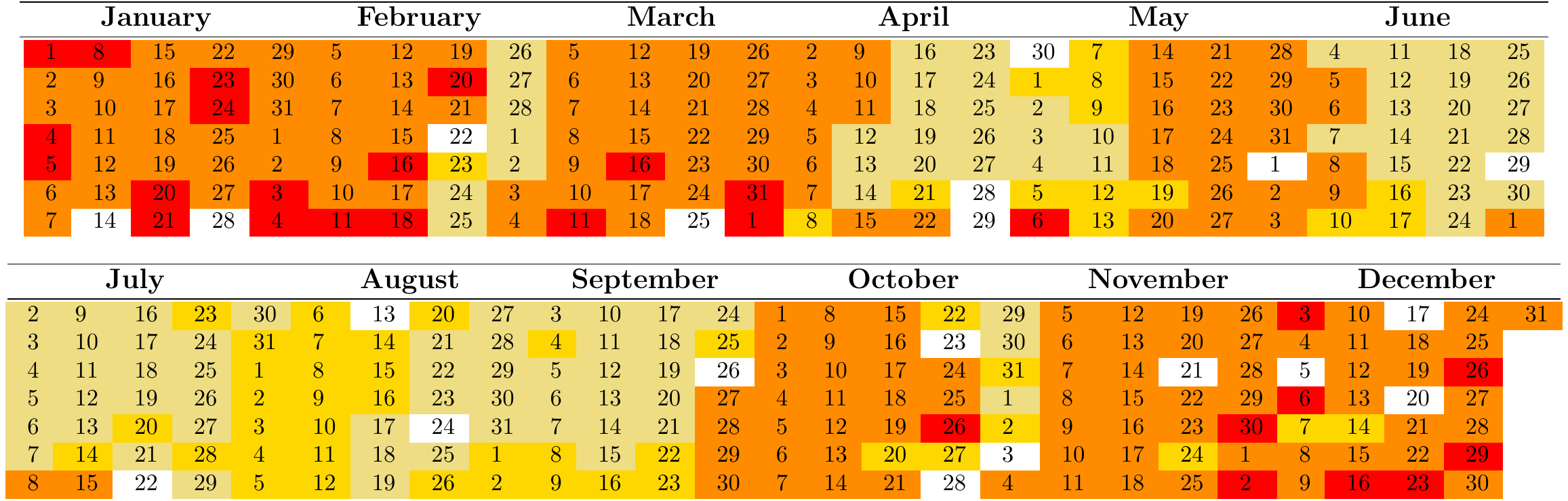}%
\caption{Calendar of the year 2007 retrieved using MODL. Each line represents a day of the week. There are four colors (one per cluster), the redder the color, the higher the average power consumption of the cluster is. The white days correspond to days with missing data.}%
\label{Fig:calendarMODL}%
\end{figure} 

\begin{figure}[!ht]%
\includegraphics[width=\textwidth]{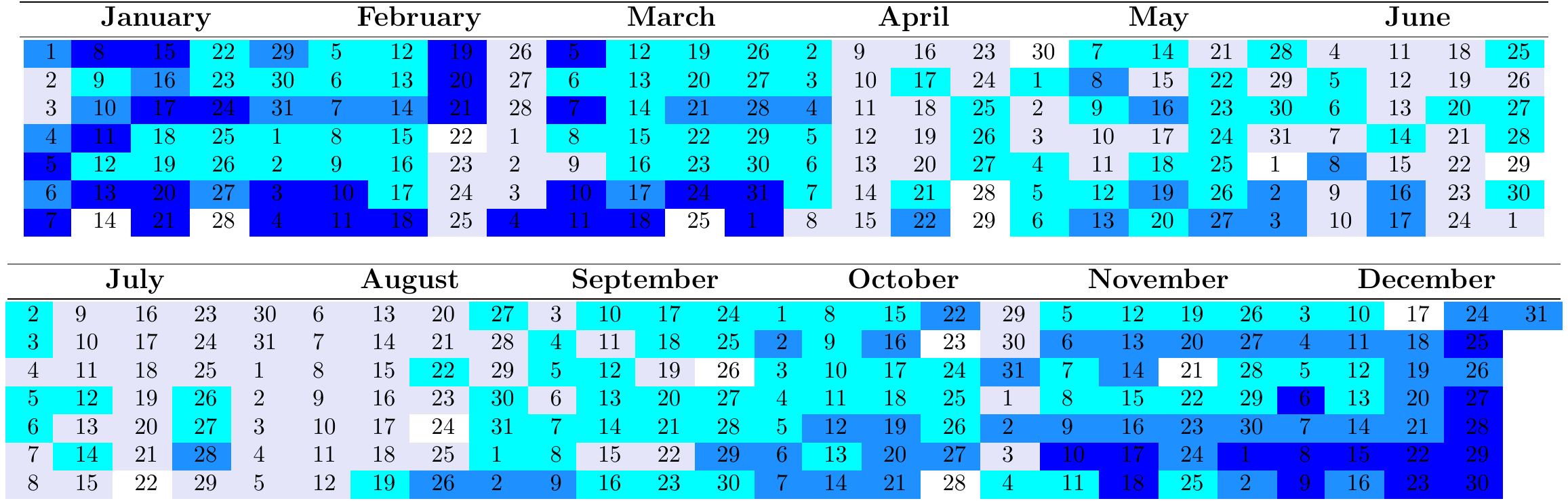}%
\caption{Calendar of the year 2007 retrieved using the approach of \cite{HebrailEtAl10}. Each line represents a day of the week. There are four colors (one per cluster), the bluer the color, the higher the average power consumption of the cluster is. The white days correspond to days with missing data.}%
\label{Fig:calendarHebr}%
\end{figure} 

For its part, the calendar obtained using the approach of \cite{HebrailEtAl10} does not show a seasonality as the one retrieved using MODL does. The clusters are more distributed all over the year. The dark blue cluster (i.e the one with the higher average power consumption) groups however only cold winter days and can be compared to the reddest cluster of the Figure \ref{Fig:calendarMODL}. The palest cluster (i.e the one with the lower average power consumption) characterizes also the warmer days and the days where there is nobody at home (see Figure \ref{Fig:57clusters}.(a)). As for the other ones with intermediate average power consumption, they do not show any correlation with the period of the day and thus do not allow an immediate interpretation.

All in all, both approaches track different patterns and consequently retrieve different clustering schemes. On the one hand, MODL requires no user-defined parameters and is suitable when there are no prior knowledges of the data. Moreover, the approach is supplemented by powerful exploratory analysis tools allowing a global interpretation of the results at different granularity levels. On the other hand, the approach of \cite{HebrailEtAl10} enables a thorough understanding of the clusters by making a time decomposition locally to every cluster. In this practical case study, it appears that both methods are complementary.
\newpage
\section{Conclusion}
\label{sec:Conclusion}
In this paper, we have focused on functional data exploratory analysis, more particularly on curves clustering. The method that is proposed in this paper does not consider the data set as a collection of curves but rather as a set of data points with three features, two continuous, the point coordinates, and one categorical, the curve identifier. By clustering the curves and discretizing each point variable while selecting the best model according to a Bayesian approach, the method behaves as a nonparametric estimator of the joint density of both the curve and point variables. In case of large data sets, the best model tends to be too fine grained for an easy interpretation. To overcome this issue, a post-processing technique is proposed. This technique aims at merging successively the clusters until obtaining a simplified clustering while losing the least accuracy. This process is equivalent to making a hierarchical agglomerative classification, whose dissimilarity measure is a weighted sum of Kullback-Leibler divergences from the new cluster to the two merged clusters. Experimentations have been conducted on an artificial data set in order to highlight interesting properties of the method and on a real world data set, the power consumption of a home over a year. On the one hand, the finest model highlights interesting phenomena such as multimodal distributions for some time segments among the same cluster. As for the post-processing technique, a well-balanced dendrogram and a concave Pareto chart emphasize the ability of the finest model to be simplified with few information loss, leading to a more interpretable clustering. An interpretation of these results has been made focusing on the differences with an alternative approach.

Beyond clustering of curves, the proposed method is able to cluster a collection of distributions. In future works, we plan to extend the method to multidimensional distributions by considering more than two point dimensions.

 \bibliographystyle{apalike}
 \bibliography{author.AKDM} 

\begin{thebibliography}{}

\bibitem[Abraham et~al., 2003]{AbrahamEtAl2003}
Abraham, C., Cornillon, P., Matzner-L{\o}be, E., and Molinari, N. (2003).
\newblock Unsupervised curve clustering using b-splines.
\newblock {\em Scandinavian Journal of Statistics}, 30(3):581--595.

\bibitem[Abramowitz and Stegun, 1970]{AbramowitzEtAl70}
Abramowitz, M. and Stegun, I. (1970).
\newblock {\em Handbook of mathematical functions}.
\newblock Dover Publications Inc., New York.

\bibitem[Blei and Jordan, 2005]{Jordan}
Blei, D.~M. and Jordan, M.~I. (2005).
\newblock Variational inference for dirichlet process mixtures.
\newblock {\em Bayesian Analysis}, 1:121--144.

\bibitem[Boull\'e, 2010]{BoulleHOPR10}
Boull\'e, M. (2010).
\newblock Data grid models for preparation and modeling in supervised learning.
\newblock In Guyon, I., Cawley, G., Dror, G., and Saffari, A., editors, {\em
  Hands on pattern recognition}. Microtome.
\newblock in press.

\bibitem[Cadez et~al., 2000]{CadezEtAl00}
Cadez, I., Gaffney, S., and Smyth, P. (2000).
\newblock A general probabilistic framework for clustering individuals and
  objects.
\newblock In {\em Proc. ACM Sixth Inter. Conf. Knowledge discovery and data
  mining}, pages 140--149.

\bibitem[Chamroukhi et~al., 2010]{chamroukhi_et_al_neurocomputing2010}
Chamroukhi, F., Sam\'{e}, A., Govaert, G., and Aknin, P. (2010).
\newblock A hidden process regression model for functional data description.
  application to curve discrimination.
\newblock {\em Neurocomputing}, 73(7-9):1210--1221.

\bibitem[Chapman et~al., 2000]{ChapmanEtAl00}
Chapman, P., Clinton, J., Kerber, R., Khabaza, T., Reinartz, T., Shearer, C.,
  and Wirth, R. (2000).
\newblock {\em {CRISP-DM 1.0} : step-by-step data mining guide}.

\bibitem[Cover and Thomas, 1991]{CoverEtAl91}
Cover, T. and Thomas, J. (1991).
\newblock {\em Elements of information theory}.
\newblock Wiley-Interscience, New York, NY, USA.

\bibitem[Delaigle and Hall, 2010]{DelaigleEtAl10}
Delaigle, G. and Hall, P. (2010).
\newblock Defining probability density for a distribution of random functions.
\newblock {\em Annals of Statistics}, 38(2):1171--1193.

\bibitem[Ferraty and Vieu, 2006]{FerratyEtAl06}
Ferraty, F. and Vieu, P. (2006).
\newblock {\em Nonparametric Functional Data Analysis: Theory and Practice}.
\newblock Springer Verlag.

\bibitem[Gaffney and Smyth, 2004]{GaffneySmythNips04}
Gaffney, S. and Smyth, P. (2004).
\newblock Joint probabilistic curve clustering and alignment.
\newblock In {\em Advances in Neural Information Processing Systems 17}.

\bibitem[Gasser et~al., 1998]{GasserEtAl98}
Gasser, T., Hall, P., and Presnell, B. (1998).
\newblock Nonparametric estimation of the mode of a distribution of random
  curves.
\newblock {\em Journal of the Royal Statistical Society}, 60:681--691.

\bibitem[Hansen and Mladenovic, 2001]{HansenEtAl01}
Hansen, P. and Mladenovic, N. (2001).
\newblock Variable neighborhood search: principles and applications.
\newblock {\em European Journal of Operational Research}, 130:449--467.

\bibitem[Hastie et~al., 2001]{HastieEtAl01}
Hastie, T., Tibshirani, R., and Friedman, J. (2001).
\newblock {\em The elements of statistical learning}.
\newblock Springer.

\bibitem[H\'ebrail et~al., 2010]{HebrailEtAl10}
H\'ebrail, G., Hugueney, B., Lechevallier, Y., and Rossi, F. (2010).
\newblock {E}xploratory {A}nalysis of {F}unctional {D}ata via {C}lustering and
  {O}ptimal {S}egmentation.
\newblock {\em {N}eurocomputing}, 73(7-9):1125--1141.

\bibitem[Neal, 2000]{Neal}
Neal, R.~M. (2000).
\newblock Markov chain sampling methods for dirichlet process mixture models.
\newblock {\em Journal of Computational AND Graphical Statistics},
  9(2):249--265.

\bibitem[Nguyen and Gelfand, 2011]{nguyen}
Nguyen, X. and Gelfand, A. (2011).
\newblock The dirichlet labeling process for clustering functional data.
\newblock {\em Sinica Statistica}, 21(3):1249--1289.

\bibitem[Ramsay and Silverman, 2005]{RamsayEtAl05}
Ramsay, J. and Silverman, B. (2005).
\newblock {\em {Functional Data Analysis}}.
\newblock Springer Series in Statistics. Springer.

\bibitem[Rissanen, 1978]{Rissanen78}
Rissanen, J. (1978).
\newblock Modeling by shortest data description.
\newblock {\em Automatica}, 14:465--471.

\bibitem[Sheather and Jones, 1991]{sheather1991}
Sheather, S. and Jones, M. (1991).
\newblock A reliable data-based bandwidth selection method for kernel density
  estimation.
\newblock {\em Journal of the Royal Statistical Society. Series B
  (Methodological)}, pages 683--690.

\bibitem[Teh, 2010]{Teh2010a}
Teh, Y.~W. (2010).
\newblock {D}irichlet processes.
\newblock In {\em Encyclopedia of Machine Learning}. Springer.

\bibitem[Vogt et~al., 2010]{Vogt}
Vogt, J.~E., Prabhakaran, S., Fuchs, T.~J., and Roth, V. (2010).
\newblock The translation-invariant wishart-dirichlet process for clustering
  distance data.

\bibitem[Wallach et~al., 2010]{Wallach}
Wallach, H.~M., Jensen, S.~T., Dicker, L., and Heller, K.~A. (2010).
\newblock An alternative prior process for nonparametric bayesian clustering.
\newblock In {\em AISTATS}, pages 892--899.

\end{thebibliography}
 \end{document}